\newif\ifreport
\pgfplotsset{compat=1.18}
\DeclareMathOperator{\CR}{\textsc Cr}
\newcommand{\LRU}{\textsc{LRU}\xspace}
\newcommand{\OPT}{\textsc{Opt}\xspace}
\newcommand{\LLRU}{\textsc{LLRU}\xspace}
\newcommand{\MARKER}{\textsc{Marking}\xspace}
\newcommand{\OPTDIST}{\textsc{Opt-Dist}\xspace}
\newcommand{\LRUDIST}{\textsc{LRU-Dist}\xspace}
\newcommand{\LLRUDIST}{\textsc{LLRU-Dist}\xspace}
\newcommand{\MARKERDIST}{\textsc{Marker-Dist}\xspace}
\newif\ifnextstep
\renewcommand{\geq}{\geqslant}
\renewcommand{\leq}{\leqslant}
\renewcommand{\epsilon}{\varepsilon}
\newcommand{\ao}[1]{\todo[author=Adrien,inline,color=red!50]{ \footnotesize{#1}}\xspace}
\newcommand{\figspace}{\vspace{-5pt}}
    \renewcommand{\figspace}{}
\begin{document}
\title{Cache Management for Mixture-of-Experts LLMs
  \ifreport
-- extended version
  \fi
}
\author{Spyros Angelopoulos\inst{1}\orcidID{0000-0001-9819-9158} \and\\
  Loris Marchal\inst{1}\orcidID{0000-0002-5519-9913} \and\\
  Adrien Obrecht\inst{2}\orcidID{0009-0007-6037-9787} \and\\
  Bertrand Simon\inst{3}\orcidID{0000-0002-2565-1163}}
\authorrunning{S. Angelopoulos, L. Marchal, A. Obrecht and B. Simon}
\institute{CNRS, International Laboratory on Learning Systems,  Montreal, Canada \and
  École Normale Supérieure de Lyon, France \and
    CNRS IN2P3 Computing Center, Lyon, France \\
 \email{adrien.obrecht@ens-lyon.fr}
 \email{[spyros.angelopoulos,loris.marchal,bertrand.simon]@cnrs.fr}}%
\maketitle              %
\begin{abstract}
  Large language models (LLMs) have demonstrated remarkable capabilities across a variety of tasks. One of the main challenges towards the successful deployment of LLMs is memory management, since they typically involve billions of parameters. To this end, architectures based on Mixture-of-Experts have been proposed, which aim to reduce the size of the parameters that are activated when producing a token. This raises the equally critical issue of efficiently managing the limited {\em cache} of the system, in that frequently used experts should be stored in the fast cache rather than in the slower secondary memory. 
  
  In this work, we introduce and study a new {\em paging} problem that models expert management optimization. Our formulation captures both the layered architecture of LLMs and the requirement that experts are cached efficiently. We first present lower bounds on the {\em competitive ratio} of both deterministic and randomized algorithms, which show that under mild assumptions, LRU-like policies have good theoretical competitive performance. We then propose a layer-based extension of LRU that is tailored to the problem at hand. %
  Extensive simulations on both synthetic datasets and actual traces of MoE usage show that our algorithm outperforms policies for the classic paging problem, such as the standard LRU.

  \keywords{Caching/Paging \and Large Language Models \and online algorithms \and competitive analysis}
\end{abstract}
\section{Introduction}

Large Language Models (LLMs) have revolutionized the application of AI in fields as diverse 
as text generation, machine translation, and natural language
understanding. LLMs such as GPT-4, PaLM and LLaMA are at the core of applications ranging from
conversational agents to code generation tools, bringing unprecedented
fluency and accuracy to text production. The performance of LLMs is
closely tied to their {\em size}, with modern models consisting of
billions or even trillions of parameters~\cite{minaee2024large}. While
this scale greatly enhances performance, it also poses a major challenge:
the management of the immense computational resources required for their training and usage.

Once trained, LLMs are deployed for {\em inference}, the process of generating text or predictions, which typically needs to be rapid for real-time applications. However, inference is computationally intensive, and thus particularly demanding in resource-constrained environments such as mobile devices or systems with limited memory and processing power. This challenge has led to extensive research into optimization methods that enhance efficiency without significantly compromising model accuracy. Notable approaches include model compression techniques such as quantization~\cite{nagel2021white}, which reduces the precision of model parameters (commonly known as {\em weights}), and sparsification~\cite{hoefler2021sparsity}, which prunes less important parameters.

One well-established strategy for model sparsification is the Mixture of Experts (MoE)
architecture. We provide a high-level overview of this model, and we refer to~\cite{du2022glam,vaswani2017attention} for a more detailed technical definition. A MoE model consists of a sequence of
{\em transformer layers} (or layers, for simplicity). Each layer is composed of an {\em attention} mechanism,  which selectively focuses on the most relevant parts of the input, and a MoE block, as depicted in Figure~\ref{fig.schema-moe}.
A {\em gating} function uses the attention output to determine which experts in the MoE block are activated, while the others remain inactive.
This approach offers two key advantages: 
first, it achieves high accuracy in text generation tasks because the model can leverage a large pool of weights across all the experts. Second, since only a small proportion of experts are active during each inference, it significantly reduces storage and computational costs.

\begin{figure}[bt]
  \centering
  \includegraphics[width=0.55\linewidth]{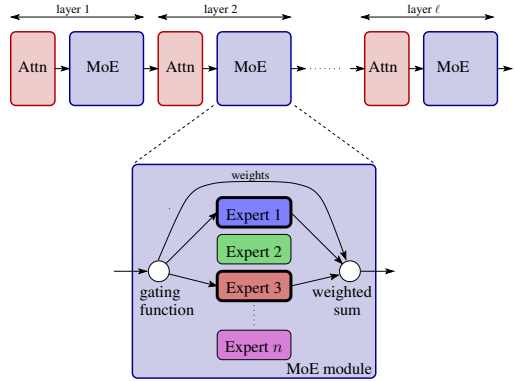}
  \caption{High-level illustration of the Mixture-of-Experts architecture}
  \label{fig.schema-moe}
  \figspace
\end{figure}

While MoE models allow for a selective, and parsimonious activation of experts, their management remains challenging. First, in order to maintain high accuracy, the combined size of all the experts’ weights is generally larger than the total size of weights in non-MoE models~\cite{minaee2024large}, leading to storage complications. LLMs are autoregressive models, meaning that the entire pipeline of {\em all} layers  has to be processed for the production of each token—a word or a portion of a word in the generated text. Hence, even though only a limited number of experts are activated per layer for each token, loading their weights into memory incurs a significant cost that adds to the overall computational cost. Such issues were observed in~\cite{mazur23}, which showed that caching certain experts weights in the memory, instead of reloading them for each layer, improved the inference time. This was achieved using a simple Least-Recently-Used (LRU) replacement policy.  However, the following question arises naturally:

{\em What are the power and limitations of caching policies for MoE models of LLMs? What theoretical and experimental improvements can one obtain, especially relative to standard caching approaches suitable for much simpler environments, such as LRU?}

\subsection{Contribution}
\label{subsec:contribution}

In this work, we answer the above question by introducing and studying a new model of {\em online caching} that is tailored to the characteristics of MoE-based LLMs. The model allows us to provide rigorous theoretical bounds on the performance of both deterministic and randomized caching policies, using the framework of {\em competitive} analysis~\cite{borodin1998online}. This framework has been infuential in the study of many other variants of caching (see Section~\ref{subsec:related} for related works) and we likewise demonstrate, for the first time, that it can be equally useful in the context of LLMs.

In Section~\ref{sec.model-RW} we describe in detail a {\em layered} model of caching, which defines the interactions between the experts across the layers, and involves parameters such as the number of layers $\ell$, the number of experts $n$, and the cache size $k$. In Section~\ref{sec:competitive} we give lower bounds on the competitive ratio of deterministic and randomized caching policies. These bounds show that as long as the parameters $n$ or $\ell$ are constant (which is typically the case in all current models), the best-possible competitive ratios cannot improve, asymptotically, upon standard policies such as LRU. 

While the theoretical lower bounds may seem, at first sight, quite restrictive, they provide insights about how to obtain algorithms that perform better in practice. In Section~\ref{sec:llru} we propose an extension of LRU that is adapted to the problem at hand. 
In Section~\ref{sec:experiments} we perform an extensive experimental evaluation of our algorithm on a variety of traces from real data of MoE usage, as well as on synthetic datasets. The results demonstrate that our algorithm clearly outperforms standard LRU, and showcases the benefits of a layered-based approach in the deployment of efficient cache policies.

\subsection{Related Work}
\label{subsec:related}

Online caching, also known as {\em paging}, is one of the fundamental online optimization problems, and has served as proving grounds for the introduction and the development of competitive analysis. In their seminal work, Sleator and Tarjan~\cite{sleator1985amortized} considered the standard model with a cache size $k$, and showed that an optimal deterministic competitive ratio equal to $k$ can be achieved by a variety of {\em marking} policies, to which LRU belongs. Randomization can help improve the competitive ratio to a tight bound $H_k=\Theta(\log k)$, where $H_n$ is the $n$-th harmonic number~\cite{fiat_competitive_2002}.%
\ifreport
Paging has been studied in a multitude of settings, such as weighted requests~\cite{chrobak1991weighted}, hierarchical cache systems~\cite{chrobak2004online}, reordering cache models~\cite{karlin2000combining}, and energy-efficient policies~\cite{agrawal2011green}. We refer to Chapters 3 and 4 in~\cite{borodin1998online} as well as to the more recent survey by Karlin and Koutsoupias~\cite{karlin2020beyond}.

Caching is a critical aspect in optimizing the inference efficiency of LLMs. A common approach involves the use of {\em Key-Value} (KV) caching~\cite{li2024survey}, which stores past activations from the attention mechanism to prevent redundant computations during the  autoregressive text generation.
Our work, in contrast, addresses a new caching setting that is specific to MoE models within LLMs. Unlike KV caching, which focuses on reusing intermediate attention outputs, we investigate caching policies aimed at efficiently managing the dynamic loading and unloading of expert weights across layers. This involves challenges due to the selective activation of experts and the substantial memory footprint associated with MoE architectures~\cite{mazur23}, making our approach complementary to existing KV caching solutions.
\fi

\section{Layered Paging: a Model for MoE Caching in LLMs}
\label{sec.model-RW}

In this section, we formulate the problem we will study, and which serves as a model of LLM caching. We first discuss, in Section~\ref{subsec:moe.caching}, the pertinent structure and properties of MoE architectures, then in Section~\ref{subsec:layered} we define the problem of layered paging.

\subsection{Caching in MoE Architectures}
\label{subsec:moe.caching}

Text generation with LLMs involves producing a sequence of {\em tokens}, which are units of text that can represent words, subwords, or punctuation marks. To generate a single token, the model processes information through $\ell$ layers sequentially. Once the last layer is processed, the token is produced and then fed back into the first layer to begin the computation for the next token. This cycle repeats until an “End of Text” token is generated or the maximum number of tokens is reached. In modern LLMs, the number of layers $\ell$ typically ranges from 10 to 100, depending on the model's size and complexity. In the MoE paradigm, in particular, each layer involves $n$ experts, which are specialized neural network modules that enhance model efficiency and performance. In current models, $n$ generally varies between 8 and 256. We denote by $E_i^{(j)}$ the $i$th expert in layer $j$, where $i \in [1,n]$ and $j \in [1,\ell]$.

In a MoE module, only a few experts, among the $n$ total, are typically required. For simplicity, we consider here that a single expert per
layer is requested. That is, in order to produce a single token, the required (ordered) series of experts is of the form  $E_{i_1}^{(1)},E_{i_2}^{(2)},\ldots,E_{i_\ell}^{(\ell)}$, for some $i_1,...,i_\ell$. Hence, the total series of experts that we have to process to produce all tokens is of the form

$$ E_{i_1^1}^{(1)}, E_{i_2^1}^{(2)}, \ldots, E_{i_\ell^1}^{(\ell)}, E_{i_1^2}^{(1)},
E_{i_2^2}^{(2)}, \ldots, E_{i_\ell^2}^{(\ell)}, \ldots $$

Computing an expert requires its data (i.e., the weights of the neural network) to be loaded in the memory. Thus, to speed up computation, it is important to ensure that frequently reused expert data are kept in the fast cache memory which is of limited size $k$ (we assume that all experts produce data of the same size).  Figure~\ref{fig.example} provides an illustration.

\begin{figure}[t!]
  \centering
  \scalebox{0.55}{%

\colorlet{c1}{red}
\colorlet{c2}{blue}
\colorlet{c3}{green!40!black}
\pgfdeclarelayer{background}
\pgfsetlayers{background,main}

\begin{tikzpicture}[scale=1.6]

\foreach \layer in {1,2,3,4}
{
    \node at (1.3*\layer, -0.3) (L\layer) {\large $L_{\layer}$};
}

\foreach \layer in {1,2,3,4}
{
    \foreach \expert in {1,2,3,4}
    {
        \node[draw,very thick, circle, minimum size=1cm, fill=white] at
            (1.3*\layer, -\expert) (E\layer\expert) {$E_{\expert}^{(\layer)}$};
    }
    \draw[dotted, thick] (1.3*\layer - 0.5, -0.5) rectangle ++(1, -4);
}

\draw[decorate, decoration={brace, amplitude=10pt}] (0.7, -4.5) -- (0.7, -0.5)
node[midway,xshift=-8pt,left]{\rotatebox{90}{Experts per layer ($n=4$)}};
\draw[decorate, decoration={brace, amplitude=10pt}] (5.7, -4.6) -- (0.8, -4.6)
node[midway,yshift=-16pt] {Number of layers $(\ell=4)$};

\begin{pgfonlayer}{background}
    \draw[c1!60, opacity=0.5, line width=13mm, rounded corners, line cap=round]
        (E11.center) -- (E21.center) -- (E32.center) -- (E43.center); 
    \draw[c2!60, opacity=0.5, line width=13mm, rounded corners, line cap=round]
        (E12.center) -- (E22.center) -- (E32.center) -- (E41.center);
    \draw[c3!60, opacity=0.5, line width=13mm, rounded corners, line cap=round] 
        (E12.center) -- (E23.center) -- (E34.center) -- (E42.center);
\end{pgfonlayer}

\coordinate (sequence) at (6, 0.8);
\node at (sequence) {\large $S = (
        \overbrace{E_1^{(1)}, E_1^{(2)}, E_2^{(3)},
        E_3^{(4)}}^{\textcolor{c1}{\text{Token 1}}}, 
        \overbrace{E_2^{(1)}, E_2^{(2)}, E_2^{(3)},
        E_1^{(4)}}^{\textcolor{c2}{\text{Token 2}}}, 
        \overbrace{E_2^{(1)}, E_3^{(2)}, E_4^{(3)},
        E_2^{(4)}}^{\textcolor{c3}{\text{Token 3}}}
    )$};

\path[<-, dashed, thick, bend left, gray] 
    ($(sequence) + (0.4,-0.4)$) edge ($(sequence) + (-1.7,-0.4)$);
\path[<-, dashed, thick, bend left, gray] 
    ($(sequence) + (1.5,-0.4)$) edge ($(sequence) + (-0.5,-0.4)$);
\node[gray] at ($(sequence) - (0, 0.8)$) {reused};

\node at ($(sequence) + (2.8, -0.8)$) (eviction) {\textcolor{red}{First eviction}};
\draw[->, thick, red] (eviction) -- ($(sequence) + (2.8,-0.4)$);

\coordinate (memory) at (6.5, -2.5);
\pgfmathsetmacro\memscale{0.7}

\pgfgettransformentries{\xscale}{\o}{\o}{\yscale}{\o}{\o}
\tikzstyle{memcell}=[rectangle, minimum height=\memscale*\yscale cm, minimum
width=\memscale*\xscale cm, draw=black, thick, inner sep=0pt];

\node[memcell, fill=c1!40] at ($(memory) + (0*\memscale, 0)$) (mem1) {$E_1^{(1)}$};
\node[memcell, fill=c1!40] at ($(memory) + (1*\memscale, 0)$) (mem2) {$E_1^{(2)}$};
\node[memcell, fill=c1!40] at ($(memory) + (2*\memscale, 0)$) (mem3) {$E_2^{(3)}$};
\node[memcell, fill=c1!40] at ($(memory) + (3*\memscale, 0)$) (mem4) {$E_3^{(4)}$};
\node[memcell, fill=c2!40] at ($(memory) + (4*\memscale, 0)$) (mem5) {$E_2^{(1)}$};
\node[memcell, fill=c2!40] at ($(memory) + (5*\memscale, 0)$) (mem6) {$E_2^{(2)}$};
\node[memcell, fill=c2!40] at ($(memory) + (6*\memscale, 0)$) (mem7) {$E_1^{(4)}$};
\node[memcell, fill=c3!40] at ($(memory) + (7*\memscale, 0)$) (mem8) {$E_3^{(2)}$};

\draw[thick, line width=1mm] 
    ($(memory)-(0.5*\memscale,0.5*\memscale)$) rectangle ++(8*\memscale, \memscale);

\node at ($(memory) + (4*\memscale-0.5, 0.7)$) {\large Memory};
\node at ($(memory) + (4*\memscale-0.5, -0.7)$) {Max $k=8$ experts loaded at once};

\end{tikzpicture}
}
  \caption{Example of experts used in the production of three tokens. The
    models has $\ell=4$ layers, each with $n=4$ experts. The experts
    used for each token are represented by a different color. Expert~$E_2^{(3)}$
    is used for token 1 and reused for 2, hence resulting in a cache
    hit. The figure on the right depicts the state of the cache when processing the third layer for token 3: Expert $E_4^{(3)}$ is
    requested, but the memory is already full, hence some other expert
    needs to be evicted.}
  \label{fig.example}
  \figspace
\end{figure}
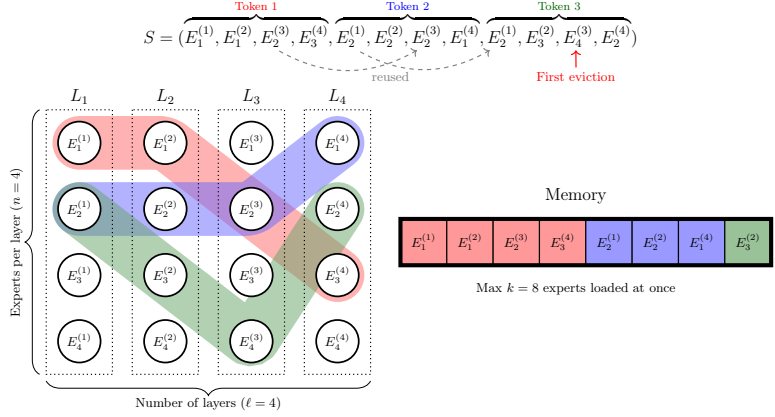

\subsection{Layered Paging}
\label{subsec:layered}

We now proceed with the formal definition of our problem, which captures the setting discussed in the previous section. We say that the data produced by each expert corresponds to a {\em page}, and recall that the cache has a capacity $k$, in that it can store $k$ pages. There are $n\ell$ possible pages in total, given that each of the $n$ experts may produce a page in each of the $\ell$ layers. The set of pages $P$ can be thus decomposed in $\ell$ disjoint sets $L_1, \ldots L_\ell$, each of cardinality $n$. We denote the pages inside each set $L_i$ as $\{p_1^{(i)},p_2^{(i)},\dots,p_n^{(i)}\}$. The input  $\sigma$ to our problem is a sequence of pages $\sigma=p_1,p_2, \ldots$, with the constraint that 
$p_i \in L_{(i-1 \bmod \ell)+1}$, modeling the layered structure of experts in the MoE architecture. For the purpose of analysis, a {\em round} in $\sigma$ is defined as a consecutive block of pages of the form $p_j,\ldots,p_{j+\ell - 1}$ where $p_i$ with $i\in [j,j+\ell-1]$ belongs to layer  $L_{i-j+1}$, so $\ell$ divides $j-1$.  For an illustration, each round is depicted in Figure~\ref{fig.example} by its proper color. 

Having defined the set of pages and the structure of an input sequence, the optimization objective is as in the standard online paging. Namely, if a requested page is present in the cache, which corresponds to a cache {\em hit}, then the page is served at zero cost. In contrast, if a requested page is not in the cache, which corresponds to a cache {\em miss}, then the algorithm must determine which page to evict from the cache, in order to bring in the requested page. In this case, the page is served at cost equal to 1. Note that any paging algorithm can be fully described as a cache eviction policy, which determines the page to be evicted in the event of a cache miss.

We will refer to the above problem as the {\em $\ell$-layer paging} problem, or {\em layered paging}, for simplicity. Note that this problem is a generalization of standard online paging, since the latter is equivalent to  $1$-layered paging. To analyze the performance of algorithms for the problem, we will rely on the canonical framework of competitive analysis. Let $A$ denote an online paging algorithm, and $A(\sigma)$ its {\em cost} on input $\sigma$, i.e., the total number of cache misses it incurs on $\sigma$. Let also \OPT denote an optimal {\em offline} algorithm that has foreknowledge of $\sigma$. It is known that an optimal offline algorithm is the one that, upon each request that results in a cache miss, evicts a page that will be requested the furthest in the future~\cite{belady_study_1966}.

\begin{definition}[Competitive Ratio]
  A deterministic algorithm $A$ is 
  $r$-competitive if there is a constant $c \in
  \mathbb{R}$ such that for any request sequence $\sigma$,  it holds that
  \[
  A(\sigma) \leq r \cdot {\OPT}(\sigma) + c.
  \]
  Similarly, a randomized algorithm $A$ is $r$-competitive if for every request $\sigma$ 
 \[
 \mathds{E}(A(\sigma)) \leq r \cdot {\OPT}(\sigma) + c,
 \]
 where the expectation is over the random choices of $A$. 
 \label{def:comp.ratio}
\end{definition}

We will denote by $\CR(A)$ the competitive ratio of an algorithm $A$.

\ifreport
The paging problem was often studied in the more general setting of the
$k$-server problem \cite{manasse_competitive_1990,young_k-server_1994}. The book \textit{"Randomized Algorithms"} by
\textsc{Motwani} \& \textsc{Raghavan} \cite{motwani_randomized_1995}
conveys a lot of the intuition behind the study of randomized
algorithms, with a section focusing on the paging problem, proving
some previously mentioned results.
\fi

\section{Competitive Analysis of Layered Paging}
\label{sec:competitive}

In this section, we present theoretical results on the competitive ratio of deterministic and randomized algorithms for layered paging.
\ifreport\else
Due to space limitations, we omit or only sketch certain technical proofs. We refer to the complete version of this work~\cite{rr-layered-paging} for all technical details and proofs.
\fi

%

\subsection{Deterministic Algorithms}
\label{sec.det}

A first approach is to consider an online algorithm that devotes a predetermined portion of the cache to requests of each layer; {\itshape e.g.}, a portion of size approximately equal to $k/\ell$ to each layer. While such simple approaches are used in practical implementations~\cite{mazur23}, we show that they result in poor theoretical performance. Fixed cache partitions lead to inefficiencies, aligning with prior results on parallel paging \cite{agrawal_green_2020}.

\begin{theorem}
  \label{thm.det_fixed}
    Any deterministic algorithm that allocates a fixed portion of the cache
    to a requests of a given layer has  unbounded competitive ratio, as long as $n\geq2$ and $\ell\geq 2$.
\end{theorem}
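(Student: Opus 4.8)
The plan is to show that for \emph{any} fixed partition of the cache, an adversary can force the online algorithm to miss on a linear (in the length of the input) number of requests while the offline optimum pays only a constant, so that no constants $r,c$ can satisfy the competitive inequality. Write $c_1,\dots,c_\ell$ for the numbers of slots the algorithm permanently reserves for the layers, so that $\sum_j c_j \le k$. The problem is non-trivial only when $k < n\ell$ (otherwise the whole page set is cacheable), and in that regime pigeonhole yields a layer $t$ with $c_t \le \lfloor k/\ell\rfloor \le n-1$; I would take $t$ to be the layer of minimum allocation. The purpose of this first step is that the sub-cache reserved for layer $t$ cannot simultaneously hold all $n$ of its pages.

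Next I construct the request sequence round by round. In every round, on each layer $j \ne t$ I request the single fixed page $p_1^{(j)}$, and on layer $t$ I request a page drawn from a fixed pool $\{p_1^{(t)},\dots,p_{c_t+1}^{(t)}\}$ of size $c_t+1 \le n$ (here $n\ge 2$ guarantees the pool is non-trivial even when $c_t=1$), always choosing one that is currently \emph{absent} from the algorithm's layer-$t$ slots. Such a page exists in every round because the algorithm stores at most $c_t$ pages of layer $t$ while the pool has $c_t+1$ elements; this is the crux of the argument, as it exploits precisely the rigidity of a fixed partition and is where $\ell\ge 2$ enters, ensuring $c_t$ is a genuine fraction of $k$. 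Hence the algorithm incurs a miss on layer $t$ in each of the $m$ rounds, so its cost is at least $m$.

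For the offline optimum, I would observe that the entire sequence uses only the $c_t+1$ pooled pages of layer $t$ together with the $\ell-1$ fixed pages of the remaining layers, i.e.\ a working set of exactly $c_t+\ell$ distinct pages. Provided this working set fits in the cache, \OPT loads these pages once and never evicts, incurring at most $c_t+\ell$ misses in total. Since this bound is independent of $m$ whereas the algorithm pays at least $m$, no constants $r,c$ can satisfy $A(\sigma)\le r\cdot\OPT(\sigma)+c$ for all $\sigma$, which gives the unbounded competitive ratio.

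The main obstacle is the proviso of the last step, $c_t+\ell\le k$: the construction needs enough total cache to store \OPT's constant working set. Choosing $t$ to minimise $c_t$ reduces this to $\lfloor k/\ell\rfloor+\ell\le k$, which holds throughout the regime of interest, where the cache comfortably exceeds the number of layers ($k\gtrsim\ell$); the degenerate small-cache corners, in which even \OPT cannot keep one page per layer resident and the ratio is in fact bounded, fall outside the intended setting. I would therefore state the regime assumption ($k<n\ell$ together with a cache large enough to hold the working set) explicitly rather than assert the bound for literally every value of $k$.
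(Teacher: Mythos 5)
Your proof is correct and follows essentially the same route as the paper's: isolate an under-provisioned layer, keep every other layer's request fixed, and rotate the requests on that layer so that the fixed partition keeps missing while \OPT caches the entire constant-size working set once. Your adaptive choice of the currently absent page (yielding a miss every round rather than every $n$ rounds) and your explicit statement of the regime of $k$ needed for \OPT's working set to fit are minor refinements of the same argument.
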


\ifreport
\begin{proof}

We consider a layered paging instance and an algorithm $A$ with a fixed cache allocation per layer. Let the total cache size $k$ be such that $n+\ell-1 \leq k < n\ell$.  Note that there exists at least one layer $L_z$ for which $A$'s cache always holds fewer than $n$ pages, as $k/\ell < n$.

We define the request sequence $\sigma$ such that the $i$-th round contains pages $p_1^{(j)}$ for each $j\neq z$ and page $p_{i \pmod n +1}^{(z)}$.
Here, each layer contributes one page, except $L_z$, which cycles through $n$ pages.

Since $A$ cannot hold $n$ pages of $L_z$, it incurs at least one cache miss every $n$ rounds (or $n\ell$ requests). An optimal algorithm, if $k > n + \ell$, can store all requested pages, ensuring no cache miss after initially filling the cache.

 Thus, $A$ has an unbounded competitive ratio.
 \qed
\end{proof}
\fi

The following theorem provides a lower bound on the competitive ratio of any deterministic algorithm. The result shows that the best competitive ratio cannot be much smaller than that of standard online paging, namely $k$.

\begin{theorem}
  For every value of $\ell$, every deterministic layered paging algorithm has a competitive ratio
  at least $k-\ell+1$.
\end{theorem}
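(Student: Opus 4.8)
The plan is to reduce the layered paging problem to the classical paging lower bound of Sleator and Tarjan, which shows that any deterministic paging algorithm on a universe of $k+1$ pages and a cache of size $k$ has competitive ratio at least $k$. The key obstacle is that layered paging forbids two consecutive requests from hitting the same layer, so I cannot simply replay the adversarial sequence that forces a miss on every request of a classical $(k+1)$-page instance. The idea to circumvent this is to embed a ``hard core'' of $k-\ell+1$ pages inside a single designated layer, while using the remaining $\ell-1$ layers merely as filler that lets the round structure advance without interfering with the adversary's control over that core.

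First I would reserve one layer, say $L_1$, and restrict the adversary's meaningful requests to $k-\ell+2$ distinct pages $p_1^{(1)},\dots,p_{k-\ell+2}^{(1)}$ from $L_1$. In every round the adversary issues exactly one request to $L_1$ (the ``active'' page) and one fixed request to each of the other layers $L_2,\dots,L_\ell$, that is, a single designated page per filler layer. Since these $\ell-1$ filler pages never change, after an initial warm-up they remain permanently in cache for any reasonable algorithm, and in particular \OPT will keep them resident forever. The effective contested cache is therefore of size $k-(\ell-1)=k-\ell+1$, operating over a pool of $k-\ell+2$ candidate pages in $L_1$ — exactly the classical ``cache of size $k'$ with $k'+1$ pages'' configuration, with $k' = k-\ell+1$.

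Second, the adversary applies the standard greedy cruel strategy \emph{within} layer $L_1$: at each round it requests the page of $L_1$ that $A$ currently does \emph{not} hold in cache. Because $A$ has at most $k'=k-\ell+1$ of the $k'+1$ candidate $L_1$-pages in cache at the moment it must choose (the filler pages occupying the rest), there is always such a missing page, forcing a miss on $A$ in every round. I would need to argue carefully that the filler requests interleaved between the $L_1$ requests do not free up the missing $L_1$-page for $A$ for free — this is where the round structure must be checked, but since filler pages are distinct from $L_1$-pages and are always present, serving them causes no eviction and does not alter $A$'s $L_1$-holdings. Thus $A$ incurs at least one miss per round.

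Finally I would bound \OPT from above using Belady's furthest-in-future rule restricted to $L_1$. Over any block of $k'$ consecutive rounds, \OPT serves the filler pages for free and, on the $L_1$-subsequence, behaves like optimal paging on $k'+1$ pages with cache $k'$: by the classical argument, \OPT incurs at most one miss per such block of $k'=k-\ell+1$ requests to $L_1$. Comparing $A$'s one miss per round against \OPT's one miss per $k'$ rounds yields a ratio of at least $k' = k-\ell+1$ in the limit of long sequences, and the additive constant $c$ in Definition~\ref{def:comp.ratio} absorbs the warm-up. The main obstacle, as noted, is verifying that the layered constraint $p_i \in L_{(i-1 \bmod \ell)+1}$ does not give the online algorithm extra slack; handling this cleanly requires insisting that $k \geq \ell$ so that a full set of filler pages plus at least one $L_1$-page fits, ensuring the construction is well-defined.
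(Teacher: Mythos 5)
Your construction is in the same spirit as the paper's: both build a universe of exactly $k+1$ pages and run a cruel adversary that requests, once per round, whichever page the deterministic algorithm is currently missing, then compare one miss per round for $A$ against one miss per $k-\ell+1$ rounds for \OPT. The paper takes $n\ell=k+1$ and lets the ``churning'' page live in any layer, whereas you concentrate the churn in a single layer of $k-\ell+2$ candidates and freeze the other $\ell-1$ layers; your \OPT analysis (Belady on $k'+1$ pages with an effective cache of $k'=k-\ell+1$) is the classical one and is fine.

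There is, however, a genuine gap in your forcing step. You assert that $A$ holds at most $k'$ of the $k'+1$ candidate $L_1$-pages ``the filler pages occupying the rest,'' and elsewhere that the fillers stay resident ``for any reasonable algorithm.'' A worst-case lower bound must hold for \emph{every} deterministic algorithm, including one that evicts a filler page in order to hold all $k-\ell+2$ candidates of $L_1$ simultaneously (this fits, since $k-\ell+2\leq k$). For such an algorithm your adversary's rule ``request the $L_1$-page that $A$ does not hold'' is undefined, and the claimed miss on the $L_1$-request does not occur. The fix is easy and brings you back to the paper's formulation: since the universe has $k+1$ pages and the cache holds $k$, some page is always missing at the start of a round; if it is an $L_1$-candidate the adversary requests it, and if it is a filler then that filler is requested later in the same round anyway (and the cache cannot change before then without a miss), so $A$ still incurs at least one miss per round. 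With that case added, and the observation that the bound is vacuous unless $k\geq\ell$ so the construction is well-defined, your argument is complete and matches the paper's bound.
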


\ifreport
\begin{proof}
Consider a deterministic layered paging algorithm $A$, and let $k=n\ell-1$. We define a sequence $\sigma$ such that two consecutive rounds differ in exactly one request: the page $p_i^{(j)}$ missing from $A$'s cache at the start of the round is requested and the other requests are repeated from the previous round. The first round contains arbitrary valid requests.

The algorithm $A$ suffers one cache miss at every round by construction. Consider a round $i$ where an optimal solution \OPT suffers a cache miss. \OPT evicts the page which is requested the latest. There are $\ell$ pages requested during round $i$, so at most $k$ pages requested between rounds $i$ and $i+k-\ell$, included. The next cache miss of \OPT is at round $i+k-\ell+1$ at the earliest, so \OPT suffers at most one cache miss every $k-\ell+1$ rounds. Therefore, $A(\sigma) \geq (k-\ell+1) \OPT(\sigma)$, so $\CR(A) \geq k-\ell+1$.
\qed
\end{proof}
\fi

\LRU is $k$-competitive for layered paging, since it is $k$-competitive in standard paging, but does it achieve a better competitive ratio for $\ell$-layered paging? The following theorem answers in the negative, for any $\ell$. 

\begin{theorem}
  For any $\ell$ and $k$ such that $\ell$ divides $k+1$, $\CR(\LRU)\geq k$.
\label{thm:lru.tight}  
\end{theorem}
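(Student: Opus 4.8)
The plan is to exhibit, for any $\ell$ and $k$ with $\ell \mid k+1$, a request sequence $\sigma$ on which $\LRU$ incurs a cache miss on \emph{every} request while $\OPT$ incurs asymptotically a factor of $k$ fewer misses, thereby forcing $\CR(\LRU) \geq k$. This mirrors the classical lower bound for standard paging, where one requests the single page that $\LRU$ just evicted; the work here is to adapt that adversary to respect the layered constraint, namely that the $i$-th request must come from layer $L_{(i-1 \bmod \ell)+1}$.

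\medskip

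\textbf{Setting up the adversary.} First I would work with $k+1$ distinct pages so that $\LRU$, whose cache holds only $k$ of them, always has exactly one ``absent'' page. Since $\ell \mid k+1$, I can choose these pages so that each layer $L_1,\dots,L_\ell$ contributes exactly $(k+1)/\ell$ of them; call this quantity $m = (k+1)/\ell$, so each layer supplies $m$ pages to the active set and the total active set has size $k+1$. The adversary always requests the one page currently missing from $\LRU$'s cache. The subtlety, compared to standard paging, is that the request at step $i$ is forced to lie in a prescribed layer, so I cannot simply request the evicted page whenever I like. The key observation I would establish is that $\LRU$'s eviction behavior keeps the layered structure consistent: when a page of layer $L_j$ is requested and missed, $\LRU$ evicts its least-recently-used page, and I must argue that the page I am about to request (the absent one) always belongs to the layer that is currently due. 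Arranging the $m$ pages per layer so that the absent page rotates correctly with the layer index is the heart of the construction; the divisibility $\ell \mid k+1$ is exactly what makes the rotation period commensurate with the round length $\ell$.

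\medskip

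\textbf{Bounding the two costs.} Once the sequence is defined, showing $\LRU(\sigma) = |\sigma|$ (a miss on every request) is immediate from the construction, since each requested page is by definition the one not in $\LRU$'s cache. The interesting direction is the upper bound on $\OPT$. Here I would invoke the standard argument (analogous to the $k-\ell+1$ lower bound proved just above, and to Sleator--Tarjan): $\OPT$ uses Bélády's furthest-in-the-future rule, and after a miss it evicts a page not requested for the longest time. Because there are only $k+1$ distinct pages in play and the cache holds $k$ of them, after each miss $\OPT$ can avoid the next miss for a long stretch, suffering at most one miss per $k$ requests asymptotically. Combining $\LRU(\sigma) = |\sigma|$ with $\OPT(\sigma) \lesssim |\sigma|/k$ yields the ratio $k$ in the limit of long sequences.

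\medskip

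\textbf{Main obstacle.} I expect the principal difficulty to be verifying that the layered constraint can be satisfied simultaneously with the ``always request the absent page'' strategy, i.e.\ proving that the unique page missing from $\LRU$'s cache at step $i$ genuinely lies in layer $L_{(i-1 \bmod \ell)+1}$. This is not automatic and is precisely where the hypothesis $\ell \mid k+1$ must be used: it guarantees a clean, periodic interleaving in which each layer's $m$ pages cycle in phase with the round structure. I would prove this by a careful inductive invariant on the contents of $\LRU$'s cache at the start of each round, showing that the identity of the absent page advances through the layers in lockstep with the forced request pattern. The cost-counting steps, by contrast, are routine adaptations of well-known paging arguments.
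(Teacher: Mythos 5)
Your proposal is correct and follows essentially the same route as the paper: the paper also uses the $k+1$ pages (with $n=(k+1)/\ell$ per layer) arranged into the explicit cyclic sequence $p_1^{(1)},\dots,p_1^{(\ell)},p_2^{(1)},\dots,p_n^{(\ell)},p_1^{(1)},\dots$, on which \LRU evicts each page just before its next request (a miss on every request) while \OPT, evicting the most recently requested page, misses only once every $k$ requests. The adaptive ``request the absent page'' adversary you describe collapses exactly to this oblivious cyclic sequence once your inductive invariant is checked, so the only difference is presentational.
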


\ifreport
\begin{proof}
Let $n=(k+1)/\ell$. We build a sequence $\sigma$ that cycles through all $n\ell=k+1$ pages, respecting the layer constraints, achieving the worst case for \LRU compared to \OPT.

\noindent
\begin{minipage}[c]{0.4\linewidth}
$$
\begin{array}[c]{ccc}
  \sigma=(
    & p_1^{(1)}, p_1^{(2)}, \ldots, p_1^{(\ell)} & \\
    & p_2^{(1)}, p_2^{(2)}, \ldots, p_2^{(\ell)} & \\
    & \ldots&\\
    & p_{n}^{(1)}, p_{n}^{(2)}, \ldots, p_{n}^{(\ell)} & \\
    & p_1^{(1)}, p_1^{(2)}, \ldots, p_1^{(\ell)} & \\
    & \ldots & ).
\end{array}
$$
\ao{This sequence should fit the illustration on the right}
\end{minipage}%
\begin{minipage}[c]{0.6\linewidth}
  \scalebox{0.92}{%
\begin{tikzpicture}[scale=0.9, every node/.style={font=\footnotesize}]
    \definecolor{incache}{RGB}{135,206,250} %

    \pgfmathsetmacro{\rows}{3}
    \pgfmathsetmacro{\cols}{6}

    \tikzset{
        fillcell/.style={
            rectangle,
            minimum size=0.85cm,
            fill=#1!50,
            draw,
            draw=#1!80!black,
            rounded corners=2pt,
        },
        fillcell/.default=gray, %
    }

    \foreach \j in {1,...,\cols} {
        \node[above] at (\j, -0.4) {$L_\j$};
    }
    \foreach \i in {1,...,\rows} {
        \node[left] at (0.3, -\i) {Token $\i$};
    }

    \node[fillcell] at (1,-1) {1};
    \node[fillcell=incache] at (2,-1) {2};
    \node[fillcell=incache] at (3,-1) {3};
    \node[fillcell=incache] at (4,-1) {4};
    \node[fillcell=incache] at (5,-1) {5};
    \node[fillcell=incache] at (6,-1) {6};
    \node[fillcell=incache]        at (1,-2) {7};
    \node[fillcell=incache] at (2,-2) {8};
    \node[fillcell=incache] at (3,-2) {9};
    \node[fillcell=incache] at (4,-2) {10};
    \node[fillcell=incache] at (5,-2) {11};
    \node[fillcell=incache] at (6,-2) {12};
    \draw[very thick, red] (0.6, -2.6) rectangle ++(0.8, -0.8) node[pos=.5] {1};
    \node at (2, -3) {2};
    \node at (3, -3) {3};
    \node at (4, -3) {4};
    \node at (5, -3) {5};
    \node at (6, -3) {6};
    
\end{tikzpicture}
}
  
  \centerline{Illustration with $n=2$ and $\ell=6$.}
\end{minipage}
\bigskip

We construct a request sequence $\sigma$ such that the $i$-th round subsequence equals $(p_{i'}^{(1)}, p_{i'}^{(2)}, \dots, p_{i'}^{(\ell)})$ with $i' = i \mod n$. That is, the request repeats itself cyclically with a period of $n$ rounds. We analyze the sequence in chunks of $n$ rounds, meaning that within each such chunk, we observe $n \ell$ total requests.

Since there are $k + 1$ distinct pages, the \LRU policy will always evict a page just before it is needed again. This means that every request results in a cache miss. \OPT evicts the last page requested at each cache miss, so suffers a cache miss every $k$ requests.

Therefore, $\CR(\LRU) \geq k$.
\qed
\end{proof}
\fi

\subsection{Randomized Algorithms}
\label{sec.rand}

In this section we turn our attention to randomized algorithms. Recall that in  standard online paging, randomization can help improving the competitive ratio to $H_k=\Theta(\log k)$, which is tight, as discussed in Section~\ref{subsec:related}. The following theorem is the main result of this section, which establishes a lower bound on the competitive ratio for the layered paging problem.

\begin{theorem}
  \label{th.rand-competitive-LB}
  Every randomized algorithm for $\ell$-layered paging with $n$ experts has 
  competitive ratio at least $\max(H_n, \frac{\log(\ell)}{6n})$.
\end{theorem}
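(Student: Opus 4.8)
The plan is to apply Yao's minimax principle~\cite{motwani_randomized_1995}: it suffices to exhibit, for each of the two terms, a probability distribution over \emph{valid} layered request sequences for which every deterministic online algorithm (equivalently, every deterministic eviction policy) incurs an expected number of misses at least the claimed factor times the expected cost of \OPT. Since $\max(a,b)$ is a lower bound as soon as both $a$ and $b$ are, I would treat the two terms by two independent constructions, and for each pick the most convenient cache size $k$, exactly as the earlier deterministic lower bounds already do.

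For the $H_n$ term I would \emph{embed standard randomized paging inside a single layer}. Fix $\ell-1$ of the layers to always request one designated page $p_1^{(j)}$; both \OPT and any non-wasteful online policy keep these pinned pages resident, so they are served for free and simply lower the effective cache by $\ell-1$. I then request, in the one remaining active layer, pages drawn so that the active sub-instance has $n$ competing pages against an effective cache of $n-1$ slots, borrowing one further ``hot'' page from a second layer to reach the $n{+}1$-pages-versus-$n$-slots configuration that yields exactly $H_n$ rather than $H_{n-1}$. On this sub-instance I would replay the classical phase argument of~\cite{fiat_competitive_2002}: partition the random sequence into phases ending just before a fresh distinct page would exceed the effective cache, argue \OPT pays one miss per phase by evicting the furthest page, and show the online policy pays $H_n$ per phase in expectation because the $i$-th fresh page of a phase is missing with probability $\Theta(1/(n{+}1-i))$. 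The delicate points here are the cache bookkeeping that realizes the crucial ``$+1$'' page and the verification that the pinned layers really stay cached under \OPT and under the best response.

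For the $\frac{\log \ell}{6n}$ term I would use a \emph{hierarchical adversary over the layers}, in the spirit of the parallel/green-paging lower bounds of Agrawal et al.~\cite{agrawal_green_2020}. Choosing $k$ of order $\ell$ so that the online cache cannot keep every layer ``hot,'' I would build the distribution in roughly $\log \ell$ levels: at each level a constant fraction of the still-active layers is \emph{under-provisioned} in the online cache (this follows by averaging, since the total budget $k$ forces at least half of any $m$ active layers to hold fewer than $2k/m$ pages), and the adversary concentrates fresh uniformly random requests on those layers before recursing on them. Each level contributes expected online cost $\Omega(1/n)$ relative to \OPT---the $1/n$ coming from the uniform choice among the $n$ pages of a layer, which both dilutes the online miss probability and is precisely what \OPT can preload with foreknowledge---and summing the $\Theta(\log \ell)$ levels gives the $\Omega(\log \ell / n)$ bound, the constant $1/6$ absorbing the slack in the ``constant fraction'' step and in the per-level gap between online and \OPT.

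I expect the main obstacle to be the $\log\ell$ construction rather than the harmonic one. Two things must be controlled simultaneously: a counting/averaging argument guaranteeing that a constant fraction of active layers remains under-provisioned at \emph{every} recursion level against the fixed cache budget $k$, and an upper bound on the \emph{total} cost of \OPT showing it can serve the entire hierarchical instance cheaply, essentially by preloading at each level the random pages it will be asked for. Reconciling these---so that the per-level ratio is genuinely $\Omega(1/n)$ and the levels compose to $\log\ell$ without \OPT's cost blowing up---is where the real work lies; pinning down the constant to $1/6$ is then routine.
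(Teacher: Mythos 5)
Your first construction (for the $H_n$ term) is a legitimate alternative to what the paper does: pinning $\ell-1$ layers and running the classical uniform-random lower bound inside one active layer is a valid layered instance, and the usual accounting (the online algorithm misses its one absent page with probability at least $1/n$ per round, while \OPT misses once per coupon-collector completion of the $n$ active pages, i.e.\ once per $nH_n$ rounds in expectation) gives $H_n$ directly; the ``borrowed hot page'' device you introduce to fix an off-by-one is unnecessary and, as described, would not add entropy anyway since that page is requested deterministically every round. The genuine gap is in your second construction. Under Yao's principle the input distribution must be \emph{oblivious}: it is fixed before the deterministic algorithm is chosen and cannot depend on the algorithm's cache state. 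Your hierarchical adversary ``concentrates fresh uniformly random requests on the under-provisioned layers,'' but \emph{which} layers hold fewer than $2k/m$ pages at a given level is a function of the particular algorithm's evolving cache contents (and of the realized randomness so far). An averaging argument tells you such layers exist for every algorithm, but it does not let a single fixed distribution target them; once you make the distribution oblivious --- e.g.\ by requesting a fresh uniform page in \emph{every} layer every round --- you have abandoned the hierarchy. Moreover, the recursion itself is only sketched: you have not shown that \OPT serves the whole hierarchical instance at cost comparable to one miss per level, nor that the levels' contributions add rather than overlap.

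The paper avoids all of this with a single oblivious distribution that yields \emph{both} terms of the max: take $k=n\ell-1$ and request, in each layer slot, a page chosen uniformly among that layer's $n$ pages. Any deterministic algorithm is missing exactly one page and pays at least $1/n$ per round in expectation, while \OPT pays at most once per block containing $k+1$ distinct pages; the expected block length is exactly the cover time $\mathbb{E}(T(n,\ell))$ of the \emph{parallel coupon collector} with $n$ coupons and $\ell$ collectors, giving $\CR(A)\geq \mathbb{E}(T(n,\ell))/n$. The two terms then come from two lower bounds on this single quantity: $\mathbb{E}(T(n,\ell))\geq nH_n$ (one collector already needs that long) and $\mathbb{E}(T(n,\ell))\geq \log(\ell)/6$ (with $\ell$ independent collectors, with probability at least $1/2$ some collector is still missing a designated coupon at time $\Theta(\log\ell/\log\frac{n}{n-1})$). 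This maximum-of-independent-collectors step is the idea your proposal is missing: the $\log\ell$ factor is extracted from \OPT's cost on the plain uniform instance, not from a multi-level adversary acting on the online algorithm.
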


Before we present the proof of Theorem~\ref{th.rand-competitive-LB}, let us discuss its significance. First, note that layered paging becomes trivial from the point of view of competitive analysis if $k\geq n\ell$: this is because the algorithm can store all pages in the cache, and thus achieve a competitive ratio equal to 1. We can thus assume that $k \leq n\ell$. Hence, if $n \in O(1)$, then Theorem~\ref{th.rand-competitive-LB} shows a lower bound $\Omega(\log \ell)=\Omega(\log k)$. Similarly, if $\ell \in O(1)$, then the theorem shows a lower bound of $\Omega(\log n)=\Omega(\log k)$. Therefore, the result shows that if either $n$ or $\ell$ is constant (as typically in practice), then any randomized algorithm has competitive ratio at least $\Omega(\log k)$, which is tight. 

To prove Theorem~\ref{th.rand-competitive-LB}, we will rely on a variant of the coupon collector problem known as the {\em parallel} coupon collector problem~\cite{ferrante_coupon-collectors_2016}. In this problem, there are $N$ different types of coupons, and $C$ agents (collectors). In each round, each of the $C$ agents selects one of the $N$ coupon types uniformly at random. Agents are not allowed to exchange or share their coupons. Define $T(N,C)$ as the number of rounds after which each agent has collected all $N$ coupons. Note that this problem is a generalization of the well-known vanilla coupon collector problem in which $C=1$~\cite{blom1993problems}, which is a key part in the analysis of the standard online paging problem~\cite{motwani_randomized_1995}. We are interested in bounding from below the {\em cover time}, namely the quantity $\mathbb{E}(T(N,C))$. We accomplish this in the following theorem, as previous work did not establish usable bounds~\cite{ferrante_coupon-collectors_2016}.

\begin{theorem}
For the parallel coupon collector with $N\geq 2$ coupons and $C$ agents, its cover time is such that
\[
\mathbb{E}(T(N,C)) \geq \max\{N \cdot H_N, \frac{\log C}{6}\}.
\]
\label{thm:coupon.combined}
\end{theorem}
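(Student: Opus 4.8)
The plan is to establish the two lower bounds separately, since the target quantity is a maximum of two terms, and then take their maximum. The first bound, $\mathbb{E}(T(N,C)) \geq N \cdot H_N$, is essentially immediate. I would observe that the parallel process terminates only once \emph{every} agent has collected all $N$ coupons, so that $T(N,C) = \max_{1\leq i\leq C} T_i$, where $T_i$ denotes the collection time of agent $i$. Each $T_i$ is distributed exactly as the cover time of the classical single-agent coupon collector on $N$ coupons, whose expectation is $N H_N$. Since a maximum of non-negative random variables dominates any single one of them, $\mathbb{E}(T(N,C)) \geq \mathbb{E}(T_1) = N H_N$.

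The second bound, $\mathbb{E}(T(N,C)) \geq \tfrac{\log C}{6}$, is the substantive part, and is where I expect the difficulty to lie. The intuition is that with $C$ independent agents the \emph{slowest} agent tends to lag, so the maximum collection time grows logarithmically in $C$. I would lower-bound the probability that a single agent has \emph{not} finished after $t$ rounds by the probability that it has never drawn one fixed coupon, say coupon $1$: for integer $t$, $\Pr(T_1 > t) \geq (1-1/N)^t \geq 2^{-t}$, where the last inequality uses the hypothesis $N \geq 2$. Then, exploiting the independence of the $C$ agents,
\[
\Pr\big(T(N,C)\leq t\big) = \Pr(T_1 \leq t)^{C} \leq \big(1-2^{-t}\big)^{C} \leq \exp\!\big(-C\,2^{-t}\big).
\]

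Next I would choose $t = \lfloor \log_2 C\rfloor$, so that $C\,2^{-t} \geq 1$ and hence $\Pr(T(N,C) > t) \geq 1 - e^{-1} > \tfrac{3}{5}$. Applying the elementary inequality $\mathbb{E}(X) \geq t\cdot\Pr(X > t)$ with $X = T(N,C)$ gives $\mathbb{E}(T(N,C)) \geq \tfrac{3}{5}\big(\log_2 C - 1\big)$, which already dominates $\tfrac{\log C}{6}$ once $C$ exceeds a small constant. For the few remaining small values of $C$, the first bound alone suffices, since $N H_N \geq 2 H_2 = 3$ for every $N \geq 2$, and this comfortably exceeds $\tfrac{\log C}{6}$ whenever $C$ is moderate. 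Taking the maximum of the two estimates then yields the claimed inequality uniformly.

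The main obstacle is controlling the constant in the second bound: the per-agent tail estimate, the amplification through the independent maximum, and the return to an expectation via $\mathbb{E}(X)\geq t\,\Pr(X>t)$ each shed a constant factor, while the floor in $t = \lfloor \log_2 C\rfloor$ contributes an additive loss. The value $6$ is deliberately chosen with room to spare so as to absorb all of these losses, and — combined with the $N H_N$ bound covering the small-$C$ regime — to make the stated inequality hold for all $N \geq 2$ and all $C$.
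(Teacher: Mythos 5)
Your proposal is correct and follows essentially the same route as the paper: both bounds are proved separately, the $N H_N$ term comes from the fact that the slowest agent dominates any single agent, and the $\tfrac{\log C}{6}$ term comes from lower-bounding each agent's tail by the probability of never drawing one fixed coupon, amplifying over the $C$ independent agents via $1-(1-p)^C\geq 1-e^{-Cp}$, and converting a constant tail probability at $t\approx\log C$ into an expectation bound. The only difference is bookkeeping: you simplify $(1-1/N)^t\geq 2^{-t}$ and patch small $C$ using the first bound, whereas the paper keeps the $N$-dependent threshold $t=\log(C/\log 2)/\log\big(\tfrac{N}{N-1}\big)$ and absorbs everything into the constant $6$ directly.
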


\begin{proof}

We prove the lower bounds separately, starting by $\mathbb{E}(T(N,C)) \geq N\cdot H_N$.

    We focus of  the first collector, corresponding to the first layer. We know from the traditional
    coupon collector problem that it will finish on average after $N H_N$
    rounds.  By definition,
     all the collectors finish collecting their coupons after the first collector finishes its
    collection, hence the first bound:
    $\mathbb{E}(T(N, C)) \geq \mathbb{E}(T(N, 1)) = N H_N.$

We now show that $\mathbb{E}(T(N,C)) \geq \frac{\log(C)}{6}$ for $N\geq 2$.

\DeclarePairedDelimiter\p{(}{)}
  We first compute the probability of the expected time to complete
  all collections to be larger than $t$. By definition:
  $$
  \mathbb{P}(T(N, C) \geq t) = 1 - \mathbb{P}(T(N, C) < t) = 1 - \prod_{i=1}^\ell \mathbb{P}(T(N, 1) < t).
  $$
  Considering only the probability of not getting the first coupon, we get:
  $$
  \mathbb{P}(T(N, C) \geq t)
   \geq 1 - \p*{1 - \p*{\frac{N-1}{N}}^t}^C
   \geq 1 - \exp(-C\p*{\frac{N-1}{N}}^t).
   $$
   For $ t = {\log(\frac{C}{\log(2)})} ~/~ {\log(\frac{N}{N-1})}$, we have $\mathbb{P}(T\big(N, C) \geq t\big)\geq \frac{1}{2}$. 

   Thus $    \mathbb{E}(T(N, C)) \geq {\log(\frac{C}{\log(2)})} ~/~  {2 \log(\frac{N}{N-1})} \geq \frac{\log(C)}{6}.$
\ifreport

With more details: Using $t \log(\frac{N}{N-1}) = \log(\frac{C}{\log(2)})$ gives
\begin{align*}
  t \log(\frac{N-1}{N}) &= \log(\frac{\log(2)}{C}) \\
  \p*{\frac{N-1}{N}}^t &= \frac{\log(2)}{C} \\
  -C \p*{\frac{N-1}{N}}^t &= \log(\frac{1}{2}) \\
  \exp(-C \p*{\frac{N-1}{N}}^t) &= \frac{1}{2} \\
  1 - \exp(-C \p*{\frac{N-1}{N}}^t) &= \frac{1}{2} 
                                         \intertext{Thus:}
                                         \mathbb{P}\p*{T(N, C) \geq \frac{\log(\frac{C}{\log(2)})}{\log(\frac{N}{N-1})}} &\geq \frac{1}{2} \\
  \mathbb{E}(T(N, C)) &\geq \frac{\log(\frac{C}{\log(2)})}{2 \log(\frac{N}{N1})} \\
  \mathbb{E}(T(N, C)) &\geq \frac{\log(\frac{C}{\log(2)})}{2 (1 + \frac{N}{N-1})} \\
  \mathbb{E}(T(N, C)) &\geq \frac{N-1}{4N - 2} \log(\frac{C}{\log(2)}) \\  %
  \mathbb{E}(T(N, C)) &\geq \frac{1}{6} \log(\frac{C}{\log(2)}) \\
  \mathbb{E}(T(N, C)) &\geq \frac{\log(C)}{6}
\end{align*}
\fi
\qed
\end{proof}

We can now proceed with the proof of Theorem~\ref{th.rand-competitive-LB}. It relies on showing that every randomized paging algorithm has a competitive ratio at least $\frac{\ell}{k} \cdot \mathbb{E}(T(n, \ell))$.

\begin{proof}[Theorem~\ref{th.rand-competitive-LB}]

We consider an instance of the layered paging problem such that $n\ell = k+1$.
    Let $A$ be a randomized algorithm. Using Yao's principle, we consider a
    random sequence $\sigma$ as input, such that the $i$-th requested page $\sigma_i$ is drawn uniformly at random
    among the $n$ pages of layer $i \mod \ell$, while $A$ is considered
    deterministic. 

    At the start of a round, the deterministic algorithm has all pages but one in cache.
    There is at least a $\frac{1}{n}$ chance of having a cache miss per
    round, the probability of drawing the missing page. There are $\frac{|\sigma|}{\ell}$ rounds, thus we have:

    $$\mathbb{E}(A(\sigma)) \geq \frac{|\sigma|}{\ell n}.$$ %

    For the analysis of the optimal algorithm \OPT, we partition the random
    sequence $\sigma$ into blocks such that each block is a minimal contiguous sub-sequence containing $k+1$
    distinct elements. \OPT must suffer at least one cache miss per block, as its cache is missing one requested page at the start of each block.
    It remains to estimate the size of the blocks. There is an equivalence with the parallel coupon collector problem, with $n$
    coupons and $\ell$ collectors, associating a round of $\ell$ requests to
    one parallel step of all collectors. The average length of a block is then $\mathbb{E}(T(n,\ell))$ rounds of $\ell$ requests.
    For a sequence $\sigma$ long enough compared to $n$ and $\ell$, we can estimate the number of blocks in the sequence to the inverse of the average length of a block times the length of the sequence. We therefore obtain $\mathbb{E}(\OPT(\sigma)) \leq \frac{|\sigma|}{\ell \mathbb{E}(T(n,\ell))}$. We conclude:

    $$\mathbb{E}(A(\sigma)) \geq \frac{|\sigma|}{\ell n} \geq \mathbb E(\OPT(\sigma)) \frac{\ell
    \mathbb{E}(T(n,\ell))}{\ell n}.$$

    Thus $\CR(A) \geq \frac{ \mathbb{E}(T(n,\ell))}{n}$, for a long enough $\sigma$.

Theorem~\ref{th.rand-competitive-LB} then follows from Theorem~\ref{thm:coupon.combined} :
$$
\CR(A) 
        \geq \frac{\mathbb{E}(T(n,\ell))}{n} 
        \geq \max\left(H_n, \frac{\log(\ell)}{6n}\right). %
$$
\qed
\end{proof}

Contrarily to the classical paging problem, it is still an open question
whether there exist randomized algorithms reaching this ratio, or simply
with a better competitive ratio that $H_k$.

\section{A Layered Extension of the \LRU Algorithm} 
\label{sec:llru}
In this section, we propose an extension of the \LRU algorithm adapted to the problem at hand. The algorithm handles better some pathological scenarios specific to this problem. More specifically, note that the standard \LRU policy evicts the oldest page in the cache, regardless of which
layer it belongs to. This is intuitively inefficient: if a page of layer $i$ is currently served, then it is generally unsafe to evict a page belonging to layer $i+1$, since it may be requested in the immediately following step. Instead, a more reasonable choice is to evict a page of layer $i-1$ or even $i$. We thus seek a {\em layer-specific} variant of \LRU, and to this end we define two quantities, specific to a page $p$ in  cache and a time $t$.  

\begin{definition}
Given a page $p$ in the cache at the current time $t$, let $\tau(p,t)$ denote the last time that $p$ was requested. We define the {\em last-round index} of $p$, denoted by $R(p,t)=\lfloor \frac{t -
    \tau(p,t)}{\ell} \rfloor$, as the number of rounds between
  $\tau(p,t)$ and $t$. We also define the {\em relative layer distance of $p$}, noted $D(p,t)= \tau(p,t) - t \mod \ell$, as the number of requests needed before reaching the layer in which $p$ belongs.
\label{def:llru}
\end{definition}

Using the above definitions, we define the {\em Layered Last Recently Used} caching algorithm (\LLRU), as the policy which, on a cache miss at time $t$, evicts the page $p$ with largest index $R(p,t)$, and in the event of ties evicts the page with largest index $D(p,t)$. This allows, intuitively, to evict pages which have not been requested since a large number of rounds, and whose layer will be requested the latest. Figure~\ref{fig:llru_step} illustrates an execution of \LLRU.

\begin{figure}[htbp]
    \centering
    \begin{subfigure}[t]{0.45\textwidth}
        \centering
        \scalebox{0.55}{%
\begin{tikzpicture}[scale=0.9, every node/.style={font=\footnotesize}]
    \definecolor{layer1}{RGB}{240,128,128} %
    \definecolor{layer2}{RGB}{135,206,250} %
    \definecolor{layer3}{RGB}{144,238,144} %

    \pgfmathsetmacro{\rows}{3}
    \pgfmathsetmacro{\cols}{6}

    \tikzset{
        fillcell/.style={
            rectangle,
            minimum size=0.85cm,
            fill=#1!50,
            draw,
            draw=#1!80!black,
            rounded corners=2pt,
        },
        fillcell/.default=gray, %
    }

    \foreach \j in {1,...,\cols} {
        \node[above] at (\j, -0.4) {$L_\j$};
    }
    \foreach \i in {1,...,\rows} {
        \node[left] at (0.3, -\i) {Token $\i$};
    }

    \ifnextstep
        \node[fillcell]        at (1,-1) {};
        \node[fillcell=layer1] at (2,-1) {2};
        \node[fillcell]        at (3,-1) {};
        \node[fillcell]        at (4,-1) {};
        \node[fillcell=layer1] at (5,-1) {1};
        \node[fillcell]        at (6,-1) {};
        \node[fillcell]        at (1,-2) {};
        \node[fillcell=layer2] at (2,-2) {6};
        \node[fillcell=layer2] at (3,-2) {5};
        \node[fillcell=layer2] at (4,-2) {4};
        \node[fillcell=layer2] at (5,-2) {3};
        \node[fillcell=layer3] at (6,-2) {11};
        \node[fillcell=layer3] at (1,-3) {10};
        \node[fillcell=layer3] at (2,-3) {9};
        \node[fillcell=layer3] at (3,-3) {8};
        \node[fillcell=layer3] at (4,-3) {7};
        \draw[very thick, red] (4.6, -2.6) rectangle ++(0.8, -0.8);
        \draw[very thick, red] (3.6, -0.6) -- (4.4, -1.4);
        \draw[very thick, red] (3.6, -1.4) -- (4.4, -0.6);
    \else
        \node[fillcell]        at (1,-1) {};
        \node[fillcell=layer1] at (2,-1) {2};
        \node[fillcell]        at (3,-1) {};
        \node[fillcell=layer1] at (4,-1) {1};
        \node[fillcell=layer2] at (5,-1) {6};
        \node[fillcell]        at (6,-1) {};
        \node[fillcell]        at (1,-2) {};
        \node[fillcell=layer2] at (2,-2) {5};
        \node[fillcell=layer2] at (3,-2) {4};
        \node[fillcell=layer2] at (4,-2) {3};
        \node[fillcell=layer3] at (5,-2) {11};
        \node[fillcell=layer3] at (6,-2) {10};
        \node[fillcell=layer3] at (1,-3) {9};
        \node[fillcell=layer3] at (2,-3) {8};
        \node[fillcell=layer3] at (3,-3) {7};
        \draw[very thick, red] (3.6, -2.6) rectangle ++(0.8, -0.8);
    \fi
\end{tikzpicture}
}
        \caption{\LLRU before processing layer $4$}
        \label{fig:llru_step_a}
    \end{subfigure}
    \hfill
    \begin{subfigure}[t]{0.45\textwidth}
        \centering
        \nextsteptrue
        \scalebox{0.55}{%
\begin{tikzpicture}[scale=0.9, every node/.style={font=\footnotesize}]
    \definecolor{layer1}{RGB}{240,128,128} %
    \definecolor{layer2}{RGB}{135,206,250} %
    \definecolor{layer3}{RGB}{144,238,144} %

    \pgfmathsetmacro{\rows}{3}
    \pgfmathsetmacro{\cols}{6}

    \tikzset{
        fillcell/.style={
            rectangle,
            minimum size=0.85cm,
            fill=#1!50,
            draw,
            draw=#1!80!black,
            rounded corners=2pt,
        },
        fillcell/.default=gray, %
    }

    \foreach \j in {1,...,\cols} {
        \node[above] at (\j, -0.4) {$L_\j$};
    }
    \foreach \i in {1,...,\rows} {
        \node[left] at (0.3, -\i) {Token $\i$};
    }

    \ifnextstep
        \node[fillcell]        at (1,-1) {};
        \node[fillcell=layer1] at (2,-1) {2};
        \node[fillcell]        at (3,-1) {};
        \node[fillcell]        at (4,-1) {};
        \node[fillcell=layer1] at (5,-1) {1};
        \node[fillcell]        at (6,-1) {};
        \node[fillcell]        at (1,-2) {};
        \node[fillcell=layer2] at (2,-2) {6};
        \node[fillcell=layer2] at (3,-2) {5};
        \node[fillcell=layer2] at (4,-2) {4};
        \node[fillcell=layer2] at (5,-2) {3};
        \node[fillcell=layer3] at (6,-2) {11};
        \node[fillcell=layer3] at (1,-3) {10};
        \node[fillcell=layer3] at (2,-3) {9};
        \node[fillcell=layer3] at (3,-3) {8};
        \node[fillcell=layer3] at (4,-3) {7};
        \draw[very thick, red] (4.6, -2.6) rectangle ++(0.8, -0.8);
        \draw[very thick, red] (3.6, -0.6) -- (4.4, -1.4);
        \draw[very thick, red] (3.6, -1.4) -- (4.4, -0.6);
    \else
        \node[fillcell]        at (1,-1) {};
        \node[fillcell=layer1] at (2,-1) {2};
        \node[fillcell]        at (3,-1) {};
        \node[fillcell=layer1] at (4,-1) {1};
        \node[fillcell=layer2] at (5,-1) {6};
        \node[fillcell]        at (6,-1) {};
        \node[fillcell]        at (1,-2) {};
        \node[fillcell=layer2] at (2,-2) {5};
        \node[fillcell=layer2] at (3,-2) {4};
        \node[fillcell=layer2] at (4,-2) {3};
        \node[fillcell=layer3] at (5,-2) {11};
        \node[fillcell=layer3] at (6,-2) {10};
        \node[fillcell=layer3] at (1,-3) {9};
        \node[fillcell=layer3] at (2,-3) {8};
        \node[fillcell=layer3] at (3,-3) {7};
        \draw[very thick, red] (3.6, -2.6) rectangle ++(0.8, -0.8);
    \fi
\end{tikzpicture}
}
        \caption{\LLRU after processing layer $4$}
        \label{fig:llru_step_b}
    \end{subfigure}
    \caption{
    One step of the \LLRU algorithm. The colored pages are the ones still in the cache. All pages that have the same last-round index are of the same color, with the number indicating the eviction
    priority given by the \LLRU algorithm.}
    \label{fig:llru_step}
    \figspace
\end{figure}

\section{Performance Evaluation through Simulations}
\label{sec:experiments}

\subsection{Datasets and Paging Strategies}

\paragraph{Datasets}

We use two different datasets in this section: traces coming from real MoE
usage, and synthetic datasets to study the sensitivity of caching
strategies to specific parameters.

The synthetic dataset is created by associating to each page a frequency of usage, sampled using a Zipf distribution. Such distributions have been proven to accurately model real-life cache systems \cite{kotera_modeling_2008,canon_hector_2023}. In a nutshell, a few pages will be frequently requested, while the other pages will rarely be requested. In practice, with two law parameters $a$ and $b$, the $j$-th page of a layer will have a probability of being sampled $p_j \sim \frac{1}{(j+b)^a}$.

Regarding MoE traces, we use two MoE LLMs: (a) the Mixtral 7B
model~\cite{jiang_mixtral_2024} (with $\ell=32$ layers and $n=8$ experts 
per layer) on $1000$ prompts from the VMWare Open Instruct
dataset\footnote{\url{https://huggingface.co/datasets/VMware/open-instruct}}
and (b) the Llama-MoE model~\cite{llama-moe} (with $n=16$ and
$\ell=32$) on $100$ prompts from the ``helpful instructions''
dataset~\cite{huggingface-helpful-instructions}. For each model, we
record the experts used for each token, for each layer. Since both
models use several experts for each layer (2 among 8 for Mixtral, 4
among 16 for Llama-MoE), while our model assumes a single expert used
for each layer, we create several rounds for the processing of a
single token: one round for the first expert, a second round for the
second expert, etc. For instance, for the Mixtral model, if the experts
used to produce one token are $(E_1^{(1)},E_2^{(1)})$ for the first layer, $(E_{1}^{(2)},E_{2}^{(2)})$ for the second layer, up to $(E_{1}^{(32)},E_{2}^{(32)})$, we replace this sequence by
the following two rounds: $E_1^{(1)},\ldots, E_{1}^{(32)}$ then
$E_2^{(1)},\ldots, E_{2}^{(32)}$.

\paragraph{Strategies} In addition to the \LLRU strategy described
above, we use strategies from the classical paging problem: \LRU as well as the \MARKER randomized algorithm~\cite{fiat_competitive_2002}. These
strategies can be applied either on the whole problem with the whole cache, without being aware
of the layers, or on each layer with a specific cache of size $k/\ell$
per layer. We denote this last variant by \textsc{Dist}: \LRUDIST,
\MARKERDIST, etc., to show that the cache is distributed among
layers. We also consider an optimal offline strategy \OPT (that knows
the whole sequence of requests) that follow Belady's rule \cite{belady_study_1966} for comparison.

\subsection{Results on MoE Traces}
\label{subsec:res_moe}

\begin{figure}[tb]
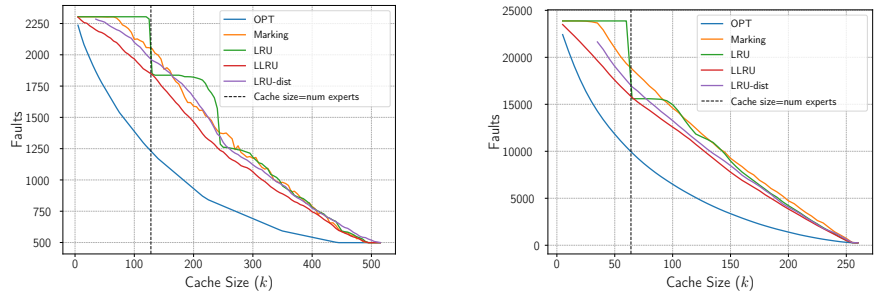

    \centering
    \begin{subfigure}{0.48\textwidth}
        \centering
        \resizebox{0.92\textwidth}{!}{
            \import{expe}{llama_0.pgf}
        }
        \caption{Cache performance on one Llama trace}
    \end{subfigure}
    \hfill
    \begin{subfigure}{0.48\textwidth}
        \centering
        \resizebox{0.92\textwidth}{!}{
            \import{expe}{mixtral_0.pgf}
        }
        \caption{Cache performance on one Mixtral trace}
    \end{subfigure}
    \caption{Comparison of \LRU, \LLRU, \MARKER and \OPT caching policies while varying cache size $k$}
    \label{fig:moe_one_trace}
    \figspace
\end{figure}

Our first experiments are direct comparisons of the previously described caching strategies on real MoE traces. In \Cref{fig:moe_one_trace}, we plot the number of cache misses of each algorithm on a single MoE trace with a cache size $k$ varying from $1$ (no cache hit) to $n \ell$ (no cache miss). As expected in both extreme cases, all the strategies fault equally as much as \OPT and the scheduling strategies are irrelevant. The intermediate values exhibit differences between strategies, and a sizable gap to \OPT that can go up to $\times 2.5$. The first observation from this figure is the great fluctuation in performance of \LRU depending on $k$. This well-known phenomenon is inherent to the cyclic behavior of \LRU which can evict pages right before they get requested again, similarly to its worst case against \OPT described in \Cref{sec.det}. This is the reason why real life applications use the \LRUDIST strategy instead which limits the fluctuations of performance, but we observe that this decision comes with an increased number of faults which we will further evaluate in the next section. In comparison, we observe that \LLRU gets the best of both worlds, achieving the lowest number of faults across online strategies, while keeping a very smooth progression with respect to the cache size. These observations are confirmed by \Cref{fig:moe_traces}, which represents the distribution of the normalized number of cache faults of each previously mentioned algorithm for ten MoE traces. Here, we take a cache of fixed size $200$ for both models to display two different "modes": Mixtral almost fits entirely (256 total experts) while LLama can't even store half (512 experts). The relative performance of algorithms is consistent with other cache sizes and in particular \LLRU significantly outperforms other policies. In this particular setting, \LLRU makes $\sim\!15\%$ less faults than \LRU and $\sim\!7\%$ less than \LLRUDIST for the LLama model, and is still $\sim\!4\!-\!5\%$ faster than \LRUDIST on the the Mixtral traces, though the choice of $k$ makes the instances very constrained. Furthermore, this figures demonstrates that these performance gains are pretty consistent, with a rather small interquartile spread.

\begin{figure}[tb]
    \centering
    \begin{subfigure}{0.48\textwidth}
        \centering
        \resizebox{0.92\textwidth}{!}{
            \import{expe}{k=200,name=llama.pgf}
        }
        \caption{Cache performance on 10 Llama traces}
    \end{subfigure}
    \hfill
    \begin{subfigure}{0.48\textwidth}
        \centering
        \resizebox{0.92\textwidth}{!}{
            \import{expe}{k=200,name=mixtral.pgf}
        }
        \caption{Cache performance on 10 Mixtral traces}
    \end{subfigure}
    \caption{Comparison of normalized faults (by \OPT) for the \LRU, \LRUDIST, \LLRU and \MARKER caching policies with $k=200$}
    \label{fig:moe_traces}
    \figspace
\end{figure}

\subsection{Shared vs. Distributed Cache}

This second set of experiments aims at exploring the performance gap between shared and unified cache systems. We saw in \Cref{sec:competitive} that an algorithm that splits the cache can have an arbitrarily big competitive ratio, which we want to test in practice. In \Cref{fig:zipf_opt}, we compare the number of faults of \OPT and \OPTDIST using a synthetic trace generated with a Zipf distribution, for a range of values of $n$ and $\ell$ and a fixed cache of size $k$. We observe that for the values for which everything fits in the cache, the ratio is $1$, but whenever this is not possible anymore, $\OPTDIST$ makes significantly more cache faults, up to $3\times$ more than $\OPT$. Using some reasonable values for an MoE system ($32$ layers, $8$ experts per layer and $64$ experts fitting in the cache), we show in \Cref{fig:zipf_vary} how the $a$ parameter of the underlying Zipf law affects the performances. We observe two effects: when $a$ is very big, the data disparity is such that almost always the same one expert will be chosen at each layer, hence both \OPT and \OPTDIST achieve similar results. Whenever $a$ gets smaller, page disparity between layers is still present, and \OPTDIST starts underperforming, with up to $1.8\times$ more page faults. More importantly, whenever $a$ gets close to $0$, i.e. there is no rank imbalance, \OPTDIST still performs poorly. This further solidifies the theoretical findings of \Cref{sec:competitive} and real life experiments of \Cref{subsec:res_moe} which states that split caching is sub-optimal and should not be used in practice.

\begin{figure}[htbp]
    \centering
    \begin{subfigure}{0.48\textwidth}
        \centering
        \resizebox{0.92\textwidth}{!}{
            \import{expe/OPT}{gen=Zipf2.pgf}
        }
        \caption{Comparison of \OPT and \OPTDIST on a Zipf distribution of parameter $2$ with $k=16$ and several values of $n$ and $\ell$}    
        \label{fig:zipf_opt}
    \end{subfigure}
    \hfill
    \begin{subfigure}{0.48\textwidth}
        \centering
        \resizebox{0.92\textwidth}{!}{
            \import{expe/Zipf}{OPT.pgf}
        }
        \caption{Fault ratio between \OPT and \OPTDIST on a Zipf distribution with $k=64$, $\ell=32$ and $n=8$ with a varying law parameter $a$}
        \label{fig:zipf_vary}
    \end{subfigure}
    \caption{Comparison of \OPT and \OPTDIST}
    \figspace
\end{figure}

\section{Conclusion}
\label{sec:conclusion}

We introduced and studied a new online paging problem, namely {\em layered paging}, as a formulation of cache management in the widely-used Mixture-of-Experts architecture of Large Language Systems. To our knowledge, this is the first study of competitive analysis in the context of LLM caching. %
We established several lower bounds on the competitive ratio that characterize the limitations of both deterministic and randomized algorithms, and which are near-tight if either the number of layers or the number of experts is a small constant, as typical in practice. We also proposed a simple, yet efficient adaptation of the Least-Recently-Used paging strategy that is tailored to LLMs/MoEs, while retaining the worst-case guarantees of the standard LRU. We experimentally demonstrated, using both real and synthetic data, that this new strategy yields significant improvements over the standard LRU.

This work paves the way for future studies of this challenging problem. Notably, we would like to close the theoretical gap between the lower and the upper bounds which exists if $n$ and $\ell$ are allowed to have arbitrary and unrestricted growth. In addition, while our formulation assumes, for simplicity, that a single expert is used at each layer, it would be interesting to consider the more general model in which several experts are involved in each layer. Last, an interesting direction for future work is to consider {\em learning-augmented} variants, in which the algorithm can leverage a machine-learned prediction about future input items. Such prediction models have been extremely influential in the standard online paging~\cite{lykouris_competitive_2018}, and some simple predictors have already been proposed in the context of MoE/LLMs~\cite{mazur23}. %

\begin{credits}
  \subsubsection{\ackname}
  This work was partially funded  by the project PREDICTIONS, grant ANR-23-CE48-0010 from the French National Research Agency (ANR).

    \subsubsection{Artifact availability.}
    The artifact is available in the Zenodo repository\cite{simon_2025_15576758}.
    
  \subsubsection{\discintname} The authors have no competing interests to declare that are relevant to the content of this article.
\end{credits}

\bibliographystyle{splncs04}
\bibliography{selected.bib}

\end{document}